%%%%%%%% ICML 2025 EXAMPLE LATEX SUBMISSION FILE %%%%%%%%%%%%%%%%%

\documentclass{article}

% Recommended, but optional, packages for figures and better typesetting:
\usepackage{microtype}
\usepackage{graphicx}
\usepackage{subfigure}
\usepackage{booktabs} % for professional tables

% hyperref makes hyperlinks in the resulting PDF.
% If your build breaks (sometimes temporarily if a hyperlink spans a page)
% please comment out the following usepackage line and replace
% \usepackage{icml2025} with \usepackage[nohyperref]{icml2025} above.
\usepackage{hyperref}

% Attempt to make hyperref and algorithmic work together better:

% Use the following line for the initial blind version submitted for review:
% \usepackage{icml2025}

% If accepted, instead use the following line for the camera-ready submission:
\usepackage[accepted]{icml2025}

% For theorems and such
\usepackage{amsmath}
\usepackage{amssymb}
\usepackage{mathtools}
\usepackage{amsthm}
\usepackage{multirow}
% if you use cleveref..
\usepackage[capitalize,noabbrev]{cleveref}
\usepackage{xcolor}

\usepackage{algorithm}
\usepackage{algorithmic}
% or alternatively, you can use the more modern package:
% \usepackage{algorithm2e}

% For better formatting and additional symbols
\usepackage{amsmath}
\usepackage{mathtools}
\usepackage{arydshln}

\usepackage{colortbl}
\usepackage{graphicx}
\usepackage{array}
\usepackage{booktabs}
\usepackage{multirow}

%%%%%%%%%%%%%%%%%%%%%%%%%%%%%%%%
% THEOREMS
%%%%%%%%%%%%%%%%%%%%%%%%%%%%%%%%
\theoremstyle{plain}
\newtheorem{theorem}{Theorem}[section]
\newtheorem{proposition}[theorem]{Proposition}

\theoremstyle{definition}

\theoremstyle{remark}

\newcommand{\ourMethod}{RAPID}

\definecolor{jinjiecolor}{rgb}{0.43, 0.43, 0.88}

\definecolor{highgreen}{HTML}{8BC34A}
\definecolor{midgreen}{HTML}{AED581}
\definecolor{lowgreen}{HTML}{C5E1A5}
\definecolor{highred}{HTML}{E57373}
\definecolor{midred}{HTML}{EF9A9A}
\definecolor{lowred}{HTML}{FFCDD2}

\usepackage{colortbl}
\definecolor{green1}{rgb}{0.8,1,0.8} % Light green
\definecolor{green2}{rgb}{0.6,1,0.6} % Medium green
\definecolor{green3}{rgb}{0.4,1,0.4} % Dark green
\definecolor{red1}{rgb}{1,0.8,0.8}   % Light red
\definecolor{red2}{rgb}{1,0.6,0.6}   % Medium red
\definecolor{red3}{rgb}{1,0.4,0.4}   % Dark red

\usepackage{array} % For better column formatting
\usepackage{booktabs} % For professional table rules
\usepackage{ulem} % For underlining

% Command to bold the best score
% \newcommand{\bestscore}[1]{\textbf{#1}}

% % Command to underline the second-best score
% \newcommand{\secondbestscore}[1]{\uline{#1}}

% Command to extract the numerical part from a formatted string
% \usepackage{xcolor}
% \definecolor{verylightgreen}{RGB}{245, 255, 245} % Very light green
% \definecolor{lightgreen}{RGB}{200, 255, 200} % Light green
% \definecolor{mediumgreen}{RGB}{150, 255, 150} % Medium green
% \definecolor{darkgreen}{RGB}{100, 255, 100} % Dark green
% \definecolor{verylightred}{RGB}{255, 245, 245} % Very light red
% \definecolor{lightred}{RGB}{255, 200, 200} % Light red
% \definecolor{mediumred}{RGB}{255, 150, 150} % Medium red
% \definecolor{darkred}{RGB}{255, 100, 100} % Dark red

% Greens
\definecolor{verylightgreen}{RGB}{235, 255, 235} % Very light green, slightly less saturated
\definecolor{lightgreen}{RGB}{205, 245, 205}     % Light green, with more balanced brightness
\definecolor{mediumgreen}{RGB}{130, 210, 130}    % Medium green, with a touch of blue for balance
\definecolor{darkgreen}{RGB}{80, 170, 80}        % Dark green, deeper and muted

% Reds
\definecolor{verylightred}{RGB}{255, 247, 247}   % Very light red, a softer tone
\definecolor{lightred}{RGB}{255, 200, 200}       % Light red, more pastel
\definecolor{mediumred}{RGB}{255, 160, 160}      % Medium red, slightly softened
\definecolor{darkred}{RGB}{220, 90, 90}          % Dark red, deeper with a hint of depth

\usepackage{array} % For better column formatting
\usepackage{booktabs} % For professional table rules
\usepackage{ulem} % For underlining
% \usepackage{xcolor} % For coloring cells

% Define colors
\definecolor{verylightgreen}{RGB}{235, 255, 235} % Very light green, slightly less saturated
\definecolor{lightgreen}{RGB}{205, 245, 205}     % Light green, with more balanced brightness
\definecolor{mediumgreen}{RGB}{205, 245, 205}     % Medium green, with a touch of blue for balance
\definecolor{darkgreen}{RGB}{205, 245, 205}         % Dark green, deeper and muted
% \definecolor{mediumgreen2}{RGB}{130, 210, 130} 
\definecolor{darkgreen2}{RGB}{80, 170, 80}

% Reds
\definecolor{verylightred}{RGB}{255, 245, 245}   % Very light red, a softer tone
\definecolor{lightred}{RGB}{255, 200, 200}       % Light red, more pastel
\definecolor{mediumred}{RGB}{255, 200, 200} 
% \definecolor{mediumred}{RGB}{255, 160, 160}      % Medium red, slightly softened
\definecolor{darkred}{RGB}{255, 200, 200}       % Dark red, deeper with a hint of depth
% \definecolor{darkred}{RGB}{220, 90, 90}

% Command to bold the best score
\newcommand{\bestscore}[1]{\textbf{#1}}

% Command to underline the second-best score
\newcommand{\secondbestscore}[1]{\uline{#1}}

% Command to apply color based on percentage difference (for scores)
\newcommand{\scorecolor}[3]{%
    \pgfmathsetmacro{\percentdiff}{100*(#2-#1)/#1}% Calculate percentage difference
    \ifdim \percentdiff pt > 0 pt % If the method score is higher than LC
        \ifdim \percentdiff pt < 3 pt
            \cellcolor{verylightgreen}%
        \else
            \ifdim \percentdiff pt < 10 pt
                \cellcolor{lightgreen}%
            \else
                \ifdim \percentdiff pt < 20 pt
                    \cellcolor{mediumgreen}%
                \else
                    \cellcolor{darkgreen}%
                \fi
            \fi
        \fi
    \else % If the method score is lower than LC
        \pgfmathsetmacro{\percentdiff}{100*(#1-#2)/#1}%
        \ifdim \percentdiff pt < 3 pt
            \cellcolor{verylightred}%
        \else
            \ifdim \percentdiff pt < 10 pt
                \cellcolor{lightred}%
            \else
                \ifdim \percentdiff pt < 20 pt
                    \cellcolor{mediumred}%
                \else
                    \cellcolor{darkred}%
                \fi
            \fi
        \fi
    \fi
    #3% Print the formatted value (best or second-best)
}

% Command to apply color based on percentage difference (for prefill time, lower is better)
\newcommand{\prefillcolor}[3]{%
    \ifdim #1 pt = 0 pt % Skip if LC value is zero
        #3%
    \else
        \pgfmathsetmacro{\percentdiff}{100*(#1-#2)/#1}% Calculate percentage difference
        \ifdim #2 pt < #1 pt % If the method prefill time is lower than LC
            \ifdim \percentdiff pt < 3 pt
                \cellcolor{verylightgreen}%
            \else
                \ifdim \percentdiff pt < 10 pt
                    \cellcolor{lightgreen}%
                \else
                    \ifdim \percentdiff pt < 20 pt
                        \cellcolor{mediumgreen}%
                    \else
                        \cellcolor{darkgreen}%
                    \fi
                \fi
            \fi
        \else % If the method prefill time is higher than LC
            \pgfmathsetmacro{\percentdiff}{100*(#2-#1)/#1}%
            \ifdim \percentdiff pt < 3 pt
                \cellcolor{verylightred}%
            \else
                \ifdim \percentdiff pt < 10 pt
                    \cellcolor{lightred}%
                \else
                    \ifdim \percentdiff pt < 20 pt
                        \cellcolor{mediumred}%
                    \else
                        \cellcolor{darkred}%
                    \fi
                \fi
            \fi
        \fi
        #3% Print the formatted value (best or second-best)
    \fi
}

% Command to extract the numerical part from a formatted string

% Todonotes is useful during development; simply uncomment the next line
%    and comment out the line below the next line to turn off comments
%\usepackage[disable,textsize=tiny]{todonotes}
\usepackage[textsize=tiny]{todonotes}

%%%%% NEW MATH DEFINITIONS %%%%%

\usepackage{amsmath,amsfonts,bm}

% Mark sections of captions for referring to divisions of figures

% Highlight a newly defined term

% Figure reference, lower-case.

% Figure reference, capital. For start of sentence

% Section reference, lower-case.

% Section reference, capital.

% Reference to two sections.

% Reference to three sections.

% Reference to an equation, lower-case.
\def\eqref#1{equation~\ref{#1}}
% Reference to an equation, upper case

% A raw reference to an equation---avoid using if possible

% Reference to a chapter, lower-case.

% Reference to an equation, upper case.

% Reference to a range of chapters

% Reference to an algorithm, lower-case.

% Reference to an algorithm, upper case.

% Reference to a part, lower case

% Reference to a part, upper case

\def\1{\bm{1}}

\def\narrowsim{\!\sim\!} %

% Random variables

% rm is already a command, just don't name any random variables m

% Random vectors

% Elements of random vectors

\def\ervw{{\textnormal{w}}}

% Random matrices

% Elements of random matrices

% Vectors

% Elements of vectors

% Matrix

\def\mphi{{\bm{\phi}}}
\def\mpsi{{\bm{\psi}}}

% Tensor
\DeclareMathAlphabet{\mathsfit}{\encodingdefault}{\sfdefault}{m}{sl}
\SetMathAlphabet{\mathsfit}{bold}{\encodingdefault}{\sfdefault}{bx}{n}

% Graph

% Sets

% Don't use a set called E, because this would be the same as our symbol
% for expectation.

% Entries of a matrix

% entries of a tensor
% Same font as tensor, without \bm wrapper

% The true underlying data generating distribution

% The empirical distribution defined by the training set

% The model distribution

% Stochastic autoencoder distributions

 % Laplace distribution

\newcommand{\softmax}{\mathrm{softmax}}

% Wolfram Mathworld says $L^2$ is for function spaces and $\ell^2$ is for vectors
% But then they seem to use $L^2$ for vectors throughout the site, and so does
% wikipedia.

 % See usage in notation.tex. Chosen to match Daphne's book.

% The \icmltitle you define below is probably too long as a header.
% Therefore, a short form for the running title is supplied here:
% \icmltitlerunning{Submission and Formatting Instructions for ICML 2025}
% \usepackage{hyperref}
% \usepackage{url}
% \usepackage{multirow}
% \usepackage{graphicx} 
% \usepackage{array}
% \usepackage{booktabs}
% \usepackage[table]{xcolor}

% \usepackage{tikz}
% \usepackage{textcomp} % For text symbols
% \usepackage{graphicx}
% \usepackage{subfig}
% \usepackage{tcolorbox}
%%% CREF
\usepackage{cleveref}
\crefformat{section}{\S#2#1#3}
\crefformat{subsection}{\S#2#1#3}
\crefformat{subsubsection}{\S#2#1#3}
\crefrangeformat{section}{\S\S#3#1#4 to~#5#2#6}
\crefmultiformat{section}{\S\S#2#1#3}{ and~#2#1#3}{, #2#1#3}{ and~#2#1#3}
\Crefformat{figure}{#2Figure~#1#3}
\Crefmultiformat{figure}{Figs.~#2#1#3}{ and~#2#1#3}{, #2#1#3}{ and~#2#1#3}
\Crefformat{table}{#2Table~#1#3}
\Crefmultiformat{table}{Tabs.~#2#1#3}{ and~#2#1#3}{, #2#1#3}{ and~#2#1#3}
\Crefformat{appendix}{Appx.~\S#2#1#3}
\crefformat{algorithm}{Alg.~#2#1#3}
\Crefformat{equation}{Eq.~(#2#1#3)}
\Crefmultiformat{equation}{Eqs.~(#2#1#3)}{ and~(#2#1#3)}{, (#2#1#3)}{ and~(#2#1#3)}

\begin{document}

\twocolumn[
% \icmltitle{\ourMethod: Efficient and Effective Long-Context Inference with Speculative Retrieval-Augmented Generation}
% \icmltitle{Retrieval-Augmented Generation as Long-Context Inference Booster}
% \icmltitle{Boosting Long-Context LLMs with Speculative RAG}
% \icmltitle{Speculative Long-Context Generation with Retrieval Drafter}
% \icmltitle{Enhancing Long-Context Inference with RAG-Distilled Speculative Decoding}
% \icmltitle{Retrieval-Augmented Long Context Generation with}
% \icmltitle{Retrieval-Augmented Speculator as Strong Long-Context Generator}
\icmltitle{RAPID: Long-Context Inference with Retrieval-Augmented \\ Speculative Decoding}

% It is OKAY to include author information, even for blind
% submissions: the style file will automatically remove it for you
% unless you've provided the [accepted] option to the icml2025
% package.

% List of affiliations: The first argument should be a (short)
% identifier you will use later to specify author affiliations
% Academic affiliations should list Department, University, City, Region, Country
% Industry affiliations should list Company, City, Region, Country

% You can specify symbols, otherwise they are numbered in order.
% Ideally, you should not use this facility. Affiliations will be numbered
% in order of appearance and this is the preferred way.
\icmlsetsymbol{equal}{*}

\begin{icmlauthorlist}
\icmlauthor{Guanzheng Chen}{equal,nus,damo,hp}
\icmlauthor{Qilong Feng}{equal,nus}
\icmlauthor{Jinjie Ni}{nus}
\icmlauthor{Xin Li}{damo,hp}
\icmlauthor{Michael Qizhe Shieh}{nus}
% \icmlauthor{Firstname6 Lastname6}{sch,yyy,comp}
% \icmlauthor{Firstname7 Lastname7}{comp}
% %\icmlauthor{}{sch}
% \icmlauthor{Firstname8 Lastname8}{sch}
% \icmlauthor{Firstname8 Lastname8}{yyy,comp}
%\icmlauthor{}{sch}
%\icmlauthor{}{sch}
\end{icmlauthorlist}
\vspace{0.5em}
\centering{\texttt{Code: }\url{https://github.com/NUS-TRAIL/RAPID}}
\icmlaffiliation{nus}{National University of Singapore}
\icmlaffiliation{damo}{DAMO Academy, Alibaba Group}
\icmlaffiliation{hp}{Hupan Lab, 310023, Hangzhou, China}

\icmlcorrespondingauthor{Guanzheng Chen}{gc.chen@u.nus.edu}
\icmlcorrespondingauthor{Michael Qizhe Shieh}{michaelshieh@comp.nus.edu.sg}
% \icmlcorrespondingauthor{Firstname2 Lastname2}{first2.last2@www.uk}
% \printAffiliationsAndNotice{}
% You may provide any keywords that you
% find helpful for describing your paper; these are used to populate
% the "keywords" metadata in the PDF but will not be shown in the document
\icmlkeywords{Machine Learning, ICML}

\vskip 0.3in
]

% \author{Guanzheng Chen$^{1,2,3,}$\thanks{This work was done during the internship of Guanzheng Chen at Alibaba DAMO Academy.} \ \ \ \ \ \ \  \ Xin Li$^{2,3,}\thanks{Corresponding Author.}$ \ \ \ \ \ \ \ \ Michael Qizhe Shieh$^{1}$ \ \ \ \ \ \ \ \  Lidong Bing$^{4}$ \\
% $^1$National University of Singapore\   \ \ \ \ \
% $^2$DAMO Academy, Alibaba Group \\
% $^3$Hupan Lab, 310023, Hangzhou, China \\ %\  \ \ \ \ \
% $^4$Shanda AI Research Institute \\
% %\texttt{{Give all emails}}
% \texttt{gc.chen@u.nus.edu, xinting.lx@alibaba-inc.com} \\
% \texttt{michaelshieh@comp.nus.edu.sg, lidong.bing@shanda.com}
% % \AND
% }

% this must go after the closing bracket ] following \twocolumn[ ...

% This command actually creates the footnote in the first column
% listing the affiliations and the copyright notice.
% The command takes one argument, which is text to display at the start of the footnote.
% The \icmlEqualContribution command is standard text for equal contribution.
% Remove it (just {}) if you do not need this facility.
% \printAffiliationsArxiv{}
% \printAffiliationsAndNotice{}  % leave blank if no need to mention equal contribution
\printAffiliationsAndNotice{\icmlEqualContribution} % otherwise use the standard text.

\begin{abstract}
% The emergence of long-context large language models (LLMs) offers a promising alternative to traditional retrieval-augmented generation (RAG) for processing extensive documents. However, the computational overhead of long-context inference, particularly in managing key-value (KV) caches, presents significant efficiency challenges. While Speculative Decoding (SD) traditionally accelerates inference using smaller draft models, its effectiveness diminishes substantially in long-context scenarios due to memory-bound KV cache operations. We present \textbf{R}etrieval-\textbf{A}ugmented S\textbf{P}eculat\textbf{I}ve \textbf{D}ecoding (\textbf{RAPID}),  that combines the efficiency of RAG with the acceleration benefits of SD. RAPID introduces the RAG drafter—a draft LLM operating on retrieved shortened contexts—to speculate the generation of long-context target LLMs. Our approach enables a new paradigm where same-scale or even larger LLMs can serve as RAG drafters for smaller target LLMs upon long context while maintaining computational efficiency. To fully leverage the potentially superior capabilities from stronger RAG drafters, we develop a test-time distillation dynamic that enriches the target distribution during inference. The results
The emergence of long-context large language models (LLMs) offers a promising alternative to traditional retrieval-augmented generation (RAG) for processing extensive documents. However, the computational overhead of long-context inference presents significant efficiency challenges. While Speculative Decoding (SD) traditionally accelerates inference using smaller draft models, its effectiveness diminishes substantially in long-context scenarios due to memory-bound KV cache operations. We introduce \textbf{R}etrieval-\textbf{A}ugmented S\textbf{P}eculat\textbf{I}ve \textbf{D}ecoding (\textbf{RAPID}), which leverages RAG for both accelerating and enhancing generation quality in long-context inference. RAPID introduces the RAG drafter—a draft LLM operating on  shortened retrieval contexts—to speculate on the generation of long-context target LLMs. Our approach enables a new paradigm where same-scale or even larger LLMs can serve as RAG drafters while maintaining computational efficiency. To fully leverage the potentially superior capabilities from stronger RAG drafters, we develop an inference-time knowledge transfer that enriches the target distribution by RAG. Extensive experiments on the LLaMA-3.1 and Qwen2.5 backbones demonstrate that RAPID effectively integrates the strengths of both RAG and long-context LLMs, achieving significant performance improvements (e.g., from 39.33 to 42.83 on InfiniteBench for LLaMA-3.1-8B) with more than 2$\times$ speedups for long-context inference. Our analyses also reveal the robustness of RAPID across various context lengths and retrieval quality.
% Our analyses reveal that RAPID achieves robust acceleration for inference beyond 32K context length with enhanced generation quality.

\end{abstract}
\section{Introduction}

Large language models (LLMs) have traditionally relied on retrieval-augmented generation (RAG) to process extensive documents by selectively retrieving relevant text segments.  While effective, the performance of RAG is inherently bounded by the capability of the retriever to extract pertinent information across diverse queries~\citep{Gao2023RetrievalAugmentedGF}. The recent emergence of long-context LLMs, capable of directly processing million-word documents~\citep{team2024gemini}, suggests a promising alternative to complex RAG pipelines. However, this breakthrough is bottlenecked by the computational efficiency of long-context inference, where processing extensive key-value (KV) caches becomes memory-bound and introduces substantial latency~\citep{Pope2022EfficientlyST}.

Speculative Decoding (SD)~\citep{chen2023acceleratinglargelanguagemodel, leviathan2023fastinferencetransformersspeculative} is a prevalent approach to accelerate LLM inference without compromising generation quality. By leveraging a smaller draft model to propose multiple candidates for single-pass validation by the target model, SD achieves significant speedup when candidates are accepted. The benefits of SD hinge on two critical factors: the computational efficiency of the draft model in generating candidates, as well as its capability to produce high-quality and acceptable candidates. However, SD will become less effective in long-context scenarios, as memory-bound KV cache operations prevent smaller LLMs from maintaining significant speed benefits over larger models~\citep{Pope2022EfficientlyST, ainslie2023gqa}. As depicted in~\Cref{fig:comparion_8b_70b}, the throughput gains of LLaMA-3.1-8B over LLaMA-3.1-70B diminish drastically (23.6 $\rightarrow$ 9.4) with increasing context lengths from 1K to 128K tokens.

\begin{figure}[!t]
    \centering
    % \includegraphics[width=0.5\linewidth]{}
    % \caption{A comparison between SFT, DPO, and LongPO.}
    \includegraphics[width=\linewidth]{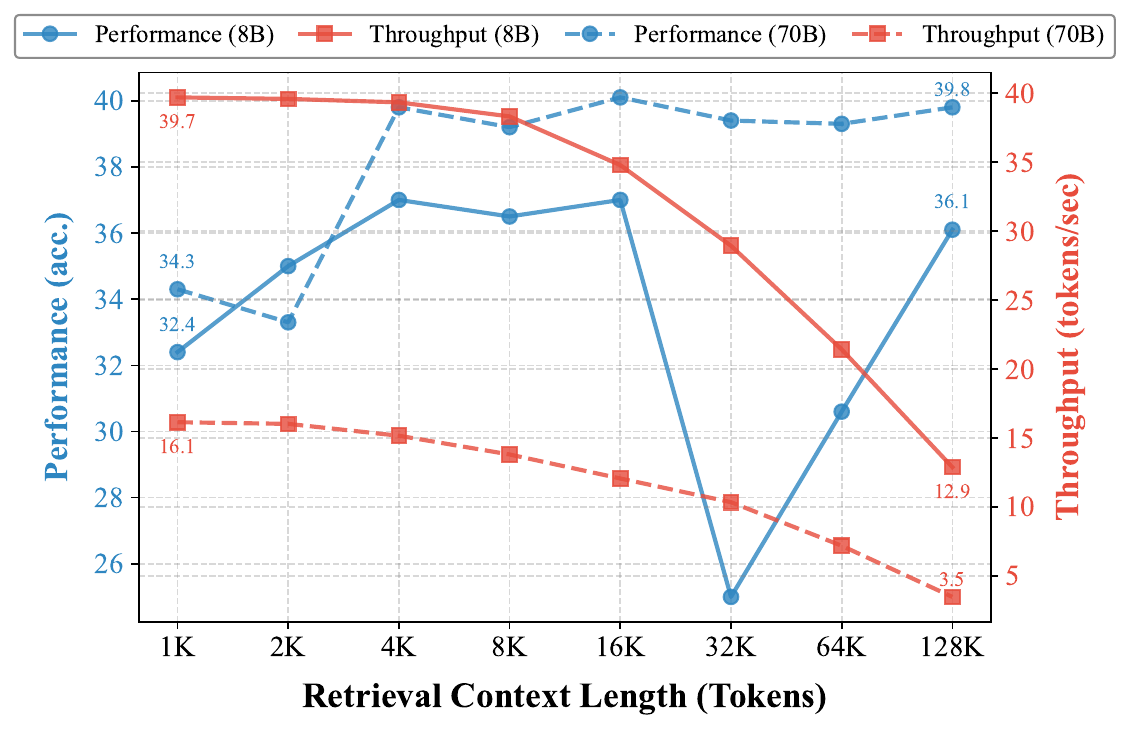}
    \vspace{-1.6em}
    \caption{Performance (accuracy, left axis) and throughput (tokens/sec, right axis) of LLaMA-3.1-8B (serving on 1$\times$A800) and LLaMA-3.1-70B (serving on 8$\times$A800) on LongBench v2 (Long) across different retrieval context lengths.}
    \label{fig:comparion_8b_70b}
\end{figure}

In this work, we introduce \textbf{R}etrieval-\textbf{A}ugmented S\textbf{P}eculat\textbf{I}ve \textbf{D}ecoding (\textbf{\ourMethod}), to bridge the gap of SD for accelerating long-context inference while enhancing generation quality. \ourMethod{} employs a \textit{RAG drafter}—the draft LLM operating on shortened context from RAG—to speculate the generation of long-context LLM following the SD process. We propose that RAG drafter can serve as \textit{ideal draft model} for long-context target LLM, as it demonstrates the potential to approach the capabilities of long-context LLM~\citep{Li2024RetrievalAG} while offering superior computational efficiency. As illustrated in~\Cref{fig:comparion_8b_70b}, LLaMA-3.1-8B with RAG on 4K$\sim$16K tokens can recover most performance achieved with full 128K tokens. This indicates that the RAG drafter is capable of producing high-quality candidates for long-context target LLM with high acceptance rate, while eliminating the memory-bound KV cache operations over long-context  to accelerate the inference process.

In addition, our \ourMethod{} opens a new paradigm for SD that \textit{leveraging the same-scale or even larger LLMs as the RAG drafters to accelerate smaller target LLMs.} This paradigm shift is possible since RAG drafters, operating on shortened contexts (e.g., 4K), potentially maintain higher efficiency than target LLMs of the same or even larger scale on long contexts (e.g., 128K) as evidenced in~\Cref{fig:comparion_8b_70b}.
Therefore, our \ourMethod{} operates on two settings: (1) \textit{self-speculation}, where long-context target LLM and RAG drafter are of the same scale; and (2) \textit{upward-speculation}, where RAG drafter involves larger parameter scale than target LLM. Moreover, in the both settings, the generation quality of RAG drafter may surpass that of long-context target models in some scenarios~\citep{Li2024LongCV}. However, 
the native SD, utilizing target LLM prediction as ground-truth distribution to perform rejection sampling, may neglect the candidates of high quality from the stronger RAG drafter. This would result in unnecessary rejection of valid candidates, thereby impeding both efficiency and performance gains.

To address this limitation, \ourMethod{} implements a retrieval-augmented target distribution, which incorporates the native long-context target distribution in SD with an \textit{inference-time knowledge transfer}. Specifically, we reversely position the RAG drafter as teacher and long-context target LLM as the student, to derive a distilled logits shift towards the RAG drafter during inference. By incorporating the shift into the prediction logits of target LLM, we obtain an enriched target distribution that is more receptive to high-quality speculative candidates.

% The transfer is derived by leveraging the gradient of knowledge distillation loss regarding target LLM logits, where we reversely position the RAG drafter as teacher and the long-context target LLM as the student. Therefore, the dynamic indicates the transfer of prior knowledge from RAG drafter to the target LLM. By incorporating this dynamic into the the prediction logits of target LLM, we obtain an enriched target distribution that is more receptive to high-quality speculative candidates.

% even when they surpass the original prediction scope of the target LLM.

Our \ourMethod{} can serve as a drop-in decoding method during long-context inference. We conduct experiments on LLaMA-3.1 (8B, 70B)~\citep{dubey2024llama3herdmodels} and Qwen2.5 (7B, 72B)~\citep{yang2024qwen2} series on $\infty$Bench~\citep{zhang-etal-2024-bench} and LongBench v2~\citep{Bai2024LongBenchVT}. The experimental results demonstrate that \ourMethod{} successfully integrates the complementary strengths of long-context LLMs and RAG while maintaining significant inference speedups. In self-speculation settings, RAPID achieves consistent performance improvements (e.g., 42.83 vs 39.33 on InfiniteBench for LLaMA-3.1-8B) with significant speedup (up to 2.69$\times$) over the long-context target LLMs. The upward-speculation setting further boosts performance through effective knowledge transfer from larger RAG drafters (e.g., improving LLaMA-3.1-8B from 42.83 to 49.98 on InfiniteBench), with comparable efficiency with the smaller long-context target LLMs. With moderate retrieval length ($\le$16K) for RAG drafter, we found~\ourMethod{} consistently achieves speedup when target long-context length beyond 32K. Our analyses also indicate that RAPID demonstrates robustness to retrieval quality and potentially superior generation quality in real-world multi-turn dialogue tasks. These results validate RAPID as an effective decoding method for accelerating long-context inference and, at the same time, enhancing generation quality through retrieval-augmented speculation.

\section{RAPID: Retrieval-Augmented Speculative Decoding}

\subsection{Background: Speculative Decoding}
Autoregressive generation with a LLM $p_\mphi$ traditionally requires sequential forward passes, where each token $x_i$ is sampled from the distribution $p_\mphi(x_i | x_{<i})$. 
This sequential nature incurs substantial computational overhead for LLM parameters loading and KV cache manipulation in GPU DRAM.
SD accelerates this process using a smaller draft model $q_\mpsi$ to generate $\gamma$ candidate tokens, which are then validated by the target model $p_\mphi$ in a single forward pass through rejection sampling. For each speculative token $x^{\prime}_i \sim q_\mpsi(x_i | x_{<i})$, the acceptance criterion is:
\begin{equation}
    r \leq \min\left(1, \frac{p_\mphi(x^{\prime}_i| x_{<i})}{q_\mpsi(x^{\prime}_i| x_{<i})}\right),
\end{equation}
where $r \sim U(0,1)$. Upon rejection, a new token is sampled from the residual distribution:
\begin{equation}
    x_i \sim \texttt{norm}({\max(p_\mphi(x_i|x_{<i}) - q_\mpsi(x_i|x_{<i}), 0)}),
\end{equation}
where \texttt{norm} is  to normalize the distribution by $\ell_1$ norm.

This procedure guarantees that the resampled tokens follow the exact distribution as direct sampling from the target model $p_\mphi$, while potentially achieving significant speedup when the speculative tokens are accepted. 
% However, in long-context scenarios where $x_{<i}$ includes extensive context $\mathcal{C}$, the efficiency gains become limited as both models must process and maintain the complete context in memory.

% RAPID addresses two critical challenges. First, traditional SD becomes inefficient with long contexts as both draft and target models must process complete context in memory, negating the computational advantages of smaller drafter. Second, the strict acceptance criterion in SD may reject high-quality candidates from RAG-based drafts that could potentially surpass the target model's capabilities.

\subsection{Overview}
\label{subsec:overview}
% Long-context LLMs face a fundamental trade-off between context utilization and computational efficiency. 
While traditional SD offers significant speedups for standard-length contexts, its benefits diminish substantially when handling extensive documents due to memory-bound KV cache operations. We present RAPID, a method that reimagines SD for long-context scenarios while enhancing generation quality. As demonstrated in~\Cref{alg:rapid}, RAPID comprises two critical components:

\paragraph{RAG Drafter.} SD becomes inefficient with long contexts as both draft and target LLMs must process complete context in memory, negating the computational advantages of smaller drafter. To overcome this challenge, \ourMethod{} utilizes a \textit{RAG drafter} to generate candidates for long-context LLMs as introduced in~\Cref{subsec:drafter}. The RAG drafter operates on selectively retrieved context segments, enabling significant speedups while maintaining access to relevant information.

\paragraph{Retrieval-Augmented Target Distribution.} The strict acceptance criterion in SD may reject high-quality candidates, as it requires strict match to the target LLM distribution for acceptance. This constraint becomes particularly limiting when using RAG drafters, which can potentially generate higher-quality outputs than long-context LLMs in certain scenarios~\citep{Li2024LongCV}. To incorporate the benefits from RAG drafters, \ourMethod{} steers a retrieval-augmented target distribution (\Cref{subsec:rag_target}), which enables knowledge transfer from RAG drafter to target model during inference. This mechanism allows the target distribution to incorporate valuable information while maintaining theoretical guarantees of the original SD.

% This constraint becomes particularly limiting when using RAG drafters, which can potentially generate higher-quality outputs than long-context LLMs in certain scenarios~\citep{Li2024LongCV}. To leverage these superior capabilities, \ourMethod{} introduces a retrieval-augmented target distribution (\Cref{subsec:rag_target}). This novel mechanism enables dynamic knowledge transfer from the RAG drafter to the target model during inference, effectively expanding the target distribution to embrace high-quality candidates while preserving theoretical guarantees of the original SD framework.

\begin{algorithm}[!t]
\caption{Retrieval-Augmented Speculative Decoding}
\label{alg:rapid}
\begin{algorithmic}[1]
\REQUIRE Target LLM $p_\mphi$, RAG drafter $q_\mpsi$, context $\mathcal{C}$, retrieval context $\mathcal{C^{\text{S}}}$, number of speculative tokens $\gamma$, temperature $T$, transfer strength $\eta$
\ENSURE Generated sequence $x_{1:n}$

\STATE $i \gets 1$ 
\WHILE{$i \leq n$}
    \STATE {\color{darkgreen2} // Generate $\gamma$ speculative tokens using RAG drafter}
    \FOR{$k \gets 1$ to $\gamma$}
        \STATE $x^{\prime}_{i+k-1} \!\sim\! q(x_{i+k-1}) \!=\! q_\mpsi(\cdot | [\mathcal{C^{\text{S}}};x_{<i};x^{\prime}_{i:i+k-1}])$ 
    \ENDFOR
    
    \STATE {\color{darkgreen2} // Validate speculative tokens sequentially}
    \FOR{$k \gets 1$ to $\gamma$}
        \STATE $j \gets i + k - 1$
        \STATE {\color{darkgreen2} // Compute target and draft distributions}
        \STATE $z(x^{\prime}_j) \gets \texttt{LogitsOf}(p_\mphi(\cdot | [\mathcal{C};x_{<j}]))$
        \STATE $p(x^{\prime}_j) \gets \softmax(z(x^{\prime}_j)/T)$
        \STATE $q(x^{\prime}_j) \gets q_\mpsi(x^{\prime}_j | [\mathcal{C^{\text{S}}};x_{<j}])$
        
        \STATE {\color{darkgreen2} // Compute retrieval-augmented target  distribution}
        \STATE $\hat{z}(x^{\prime}_j) \gets z(x^{\prime}_j) \!+\! \eta T(q(x^{\prime}_j) \!-\! p(x^{\prime}_j))$ (\Cref{eq:delta})
        \STATE $\hat{p}(x^{\prime}_j) \gets \softmax(\hat{z}(x^{\prime}_j)/T)$
        \STATE $r \sim U(0,1)$
        \IF{$r \leq \min(1, \frac{\hat{p}(x^{\prime}_j)}{q(x^{\prime}_j)})$}
            \STATE $x_{j} \gets x^{\prime}_{j}$
            \STATE $i \gets j + 1$
        \ELSE 
            \STATE \textbf{goto} line 26
        \ENDIF
    \ENDFOR
    
    \STATE {\color{darkgreen2} // Sample from residual if rejected}
    \STATE $x_i \sim \texttt{norm}(\max(p(x_i)-\hat{p}(x_i), p(x_i)-q(x_i), 0))$
    \STATE $i \gets i + 1$
\ENDWHILE
\STATE \textbf{return} $x_{1:n}$
\end{algorithmic}
\end{algorithm}

\subsection{RAG Drafter}
\label{subsec:drafter}
% Traditional SD faces significant challenges in long-context scenarios. 
When processing queries for extensive context $\mathcal{C}$, the target distribution of naive SD is 
\begin{equation}
    p(x_i) = p_\mphi(x_i \vert [\mathcal{C};x_{<i}]).
\end{equation}
Even with smaller draft models, the computational benefits diminish substantially due to memory-bound KV cache operations over the complete context $\mathcal{C}$.
To overcome this limitation, we propose to leverage RAG as the foundation for our draft model. 

% Instead of processing the entire context $\mathcal{C}$, our RAG drafter operates on a compressed context $\mathcal{C^{\text{S}}}$.  Following RAG pipelines, $\mathcal{C^{\text{S}}}$ comprises text segments deriving through selective retrieval of relevant chunks to the query, where the relevance is measured by the cosine similarity between sentence embeddings. 

Instead of processing the entire context $\mathcal{C}$, our RAG drafter operates on a compressed context $\mathcal{C^{\text{S}}}$. Specifically, $\mathcal{C^{\text{S}}}$ is constructed through selective retrieval: text segments from $\mathcal{C}$ are encoded into a dense vector space, where semantic similarity to the query is measured via cosine similarity, enabling efficient identification and extraction of the most relevant context chunks.

After deriving the compress context $\mathcal{C^{\text{S}}}$, the draft distribution is formally defined as
\begin{equation}
    q(x_i) = q_\mpsi(x_i \vert [\mathcal{C^{\text{S}}};x_{<i}]),
\end{equation}
where we maintain strict control over the compression ratio by enforcing $|\mathcal{C^{\text{S}}}| \le |\mathcal{C}|/\lambda$ with $\lambda \gg 1$. This compressed context enables our draft model to maintain significant speed advantages while preserving access to relevant information.

Based on the RAG drafter, the modified speculative decoding process proceeds as follows. For each generation step, we sample $\gamma$ speculative tokens from the RAG drafter as $x^{\prime}_i \narrowsim q(x_i)$.
% \begin{equation}
%     x^{\prime}_i \sim q_\mpsi(\cdot \vert [\mathcal{C^{\text{S}}};x_{<i}])
% \end{equation}
These candidates are validated against the target model using a modified acceptance criterion:
\begin{equation}
\label{eq:reject_sampling}
    r \leq \min\left(1, \frac{p(x_i)}{q(x_i)}\right) \!=\!  \min\left(1, \frac{p_\mphi(x^{\prime}_i| [\mathcal{C};x_{<i}])}{q_\mpsi(x^{\prime}_i| [\mathcal{C^{\text{S}}};x_{<i}])}\right)
\end{equation}
where $r \sim U(0,1)$. 

The RAG-based drafting mechanism offers two key advantages: (1) significant reduction in memory overhead and computational cost through compressed context operations ($|\mathcal{C^{\text{S}}}| \ll |\mathcal{C}|$), and (2) potentially enhanced speculation quality through selective retrieval of relevant information compared to processing diluted full context. Moreover, due to the remarkable efficiency on shorten context, \ourMethod{} even enables the use of same-scale or larger models as drafters to accelerate smaller target LLMs.

% This RAG-based drafting mechanism offers two key advantages. First, by operating on compressed context $\mathcal{C^{\text{S}}}$ where $|\mathcal{C^{\text{S}}}| \ll |\mathcal{C}|$, it significantly reduces memory overhead and computational cost compared to naive SD. Second, the selective retrieval ensures the drafter maintains access to the most relevant information, potentially enabling more accurate speculation than traditional drafters processing diluted full context. This design allows RAPID to maintain theoretical guarantees while making speculative decoding practical for long-context scenarios. Moreover, the reduced memory footprint opens up the possibility of using same-scale or even larger models as drafters while preserving computational efficiency.

% This design enables RAPID to maintain the theoretical guarantees of speculative decoding while significantly improving its practicality for long-context scenarios. Furthermore, by operating on compressed contexts, RAPID opens up the possibility of using same-scale or even larger models as drafters while maintaining computational efficiency.

\subsection{Retrieval-Augmented Target Distribution}
\label{subsec:rag_target}
The capability of LLMs to effectively utilize context often deteriorates with irrelevant information inclusion. Our empirical analysis in~\Cref{fig:comparion_8b_70b} shows that LLMs, by focusing on retrieved relevant chunks, can sometimes surpass full-context utilization in generation quality. However, the strict acceptance criterion of traditional SD may potentially result in unnecessary rejection for these superior generations when they deviate from the target distribution, leading to both quality degradation and computational inefficiency.

To address this limitation, we introduce retrieval-augmented target distribution, which enables knowledge transfer from the RAG drafter to the long-context target model during inference. Formally, the retrieval-augmented target distribution in~\ourMethod{} is defined as:
\begin{equation}
\label{eq:new_target}
    \hat{p}(x_i) = \softmax(z(x_i) / T + \eta \cdot (q(x_i) - p(x_i))),
\end{equation}
where $\eta$ is a hyperparameter controlling the strength of knowledge transfer, $z(x_i)$ is the unnormalized logits of target LLM, namely $p(x_i) = \softmax \left(z(x_i)/T\right)$ and $T$ is the temperature.

% \begin{align}
%     p(x^{\prime}_i) &= p_\mphi(x^{\prime}_i | [\mathcal{C};x_{<i}]) = \text{softmax}(z_{\mphi}(x^{\prime}_i | [\mathcal{C};x_{<i}])), \\
%     q(x^{\prime}_i) &= q_\mpsi(x^{\prime}_i | [\mathcal{C^{\text{S}}};x_{<i}]).
% \end{align}

\vspace{1em}
% \textbf{Theorem 1.} \textit{For a student model distribution $p(x) \!=\! \softmax \left(z(x)/T\right)$ and the teacher model distribution $q(x)$, the gradient of knowledge distillation loss~\citep{Hinton2015DistillingTK} regarding $z(x)$ is: $T \cdot (p-q)$.} (See proof in~\Cref{sec:proof1}.)
\begin{proposition}
\label{theorem:distill}
Let $p(x) = \softmax(z(x)/T)$ be a student model distribution parameterized by logits $z(x)$ and temperature $T$, and $q(x)$ be a teacher model distribution. The gradient of the knowledge distillation loss $\mathcal{L} = T^2 \cdot \text{KL}(q(x) \| p(x))$ with respect to $z(x)$ is:
\begin{equation*}
    \frac{\partial \mathcal{L}}{\partial z(x)} = T \cdot (p(x) - q(x))
\end{equation*}
where $\text{KL}(\cdot \| \cdot)$ denotes the Kullback-Leibler divergence.
\end{proposition}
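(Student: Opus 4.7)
The plan is to carry out a direct calculation using the definition of the softmax gradient, exploiting the fact that the only $z$-dependence in $\mathrm{KL}(q \| p)$ lives in the cross-entropy term. This is essentially the classical Hinton-style knowledge distillation gradient derivation, so the argument should be short and mechanical; the only thing to be careful about is the bookkeeping of $T$ factors so that the $T^2$ prefactor and the $1/T$ inside the softmax combine to give the single $T$ claimed in the statement.

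First I would expand
\begin{equation*}
\mathrm{KL}(q(x) \| p(x)) = \sum_{x} q(x) \log q(x) - \sum_{x} q(x) \log p(x),
\end{equation*}
observing that the entropy term $\sum_x q(x)\log q(x)$ does not depend on $z$, so $\nabla_{z} \mathcal{L}$ comes entirely from differentiating the cross-entropy $-T^2 \sum_x q(x) \log p(x)$.

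Next I would substitute the softmax form of $p$ by writing $\log p(x) = z(x)/T - \log Z$ with $Z = \sum_{x'} \exp(z(x')/T)$, and apply the standard softmax derivative
\begin{equation*}
\frac{\partial \log p(x)}{\partial z(y)} = \frac{1}{T}\bigl(\mathbb{1}[x=y] - p(y)\bigr).
\end{equation*}
Plugging this in and using $\sum_{x} q(x) = 1$, the weighted sum collapses to
\begin{equation*}
\frac{\partial}{\partial z(y)}\!\left[-\sum_{x} q(x)\log p(x)\right] = \frac{1}{T}\bigl(p(y) - q(y)\bigr).
\end{equation*}

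Finally I would multiply by the $T^2$ factor in $\mathcal{L}$, which cancels one power of $T$ in the denominator and yields $\partial\mathcal{L}/\partial z(y) = T(p(y) - q(y))$, matching the claim. There is no real obstacle here beyond the $T$-bookkeeping just described; the main payoff of stating the proposition this way is that it justifies interpreting the shift $\eta T(q(x_i)-p(x_i))$ in \Cref{eq:new_target} as exactly one inference-time gradient step (with step size $\eta/T$) of the distillation loss against $z(x_i)$, which is the motivation the subsequent text relies on.
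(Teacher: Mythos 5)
Your proof is correct and is essentially the same argument as the paper's: both isolate the cross-entropy term $-T^2\sum_x q(x)\log p(x)$ as the only $z$-dependent part, apply the softmax log-derivative $\partial \log p(x)/\partial z(y) = \tfrac{1}{T}\bigl(\mathbb{1}[x=y]-p(y)\bigr)$, and collapse the weighted sum using $\sum_x q(x)=1$. One small slip in your closing remark (not in the proof itself): since the gradient with respect to $z$ is $T(p-q)$ and the shift applied in \Cref{eq:delta} is $\eta T(q-p)$, the update corresponds to a gradient step of size $\eta$ on $z$, not $\eta/T$.
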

\begin{proof}
See~\Cref{sec:proof1}.
\end{proof}
\vspace{0.5em}

The design of retrieval-augmented target distribution in~\Cref{eq:new_target} implies a knowledge distillation step by positioning the RAG drafter as the teacher and the target model as the student, to infuse
a proportion of knowledge from RAG drafter into naive long-context target distribution.
% the naive long-context target distribution with knowledge from RAG drafter.

Specifically, for a distillation loss~\citep{Hinton2015DistillingTK} $\mathcal{L}$ between RAG draft distribution $q(x_i)$ (teacher) and long-context target distribution $p(x_i)$ (student), according to~\Cref{theorem:distill}, we have the distilled logits shift as
\begin{equation}
    \frac{\partial \mathcal{L}}{\partial z(x_i)} = T \cdot (p(x_i) - q(x_i)).
\end{equation}
% This formulation is theoretically grounded in knowledge distillation principles. Specifically, we reverse the traditional teacher-student roles, positioning the RAG drafter as the teacher and the target model as the student.

Now we can derive a ``distilled'' $z(x_i)$ augmented by RAG drafter through 
\begin{equation}
\label{eq:delta}
\begin{aligned}
    \hat{z}(x_i) &= z(x_i) - \eta \frac{\partial \mathcal{L}}{\partial z(x_i)} \\
              &= z(x_i) + \eta T(q(x_i) - p(x_i)),
\end{aligned}
\end{equation}
where $\eta$ controls the strength of knowledge transfer. Therefore, the retrieval-augmented target distribution in~\Cref{eq:new_target} is equivalent to the normalized $\hat{z}(x_i)$, i.e., $\hat{p}(x_i)\!=\!\softmax(\hat{z}(x_i)/T)$.

The retrieval-augmented target distribution $\hat{p}(x_i)$ enables flexible knowledge transfer from the RAG drafter while maintaining verification capability. Since the unnormalized logits $z(x_i) \in \mathbb{R}$ have larger magnitude compared to the normalized distributions $p(x_i), q(x_i) \in [0,1]$, the $\hat{p}(x_i)$ preserves the long-context ability of target LLM to verify candidates effectively. We empirically validate the robustness of this distribution in~\Cref{subsubsec:robustness}.

% For inference, we replace $p(x_i)$ with $\hat{p}(x_i)$ in the acceptance criterion (\Cref{eq:reject_sampling}). Let $p(x_i) \!=\! [\ervw_j]_{j=1}^{j=|V|}$ and $\hat{p}(x_i) \!=\! [\hat{\ervw}_j]_{j=1}^{j=|V|}$ with $|V|$ as the vocabulary size, we maintain $\hat{\ervw}_{k} = \ervw_{k}$ for $k \in \{v \vert \hat{\ervw}_{v} < 0.1 \cdot \max(\hat{\ervw}_{1:|V|})\}$ to avoid the affect of tail distribution following~\citet{li-etal-2023-contrastive}.

% For inference, we replace $p(x_i)$ with $\hat{p}(x_i)$ in the acceptance criterion (\Cref{eq:reject_sampling}). Let $p(x_i) = [\ervw_j]_{j=1}^{|V|}$ and $\hat{p}(x_i) = [\hat{\ervw}_j]_{j=1}^{|V|}$ denote the probability over vocabulary $V$. Following~\citet{li-etal-2023-contrastive}, we maintain 
% \begin{equation}
%     \hat{\ervw}_{k} = \ervw_{k}, k \in \{v \vert \hat{\ervw}_{v} < 0.1 \cdot \max(\hat{\ervw}_{1:|V|})\},
% \end{equation}
%  to prevent distortion in the tail of the distribution.

 For inference, we replace $p(x_i)$ with $\hat{p}(x_i)$ in the acceptance criterion (\Cref{eq:reject_sampling}). Let $p(x_i) = [\ervw_j]_{j=1}^{|V|}$ and $\hat{p}(x_i) = [\hat{\ervw}_j]_{j=1}^{|V|}$ denote the probability vectors over vocabulary $V$. Following~\citet{li-etal-2023-contrastive}, we maintain 
\begin{equation}
    \hat{\ervw}_{k} = \ervw_{k}, \quad \forall k \in \{v \in [|V|]: \hat{\ervw}_{v} < 0.1 \cdot \max_{j \in [|V|]} \hat{\ervw}_{j}\},
\end{equation}
to prevent distortion in the tail of the distribution.

When rejection occurs, we sample from an adjusted residual distribution:
\begin{equation}
    x_i \sim \texttt{norm}({\max(p(x_i)-\hat{p}(x_i), p(x_i)-q(x_i))}).
\end{equation}
This sampling strategy maintains theoretical guarantees, where we prove in~\Cref{sec:proof2} that the resulting tokens follow the same distribution as direct sampling from the original target model $p(x_i)$.

\section{Experimental Setup}

\begin{table*}[!t]
\centering
\caption{Comprehensive evaluation of RAPID against baseline methods across different target-draft model configurations. We report performance on $\infty$Bench and LongBench v2, along with prefill time and throughput speedup on LongBench v2 (Long, CoT) subset. LC and RAG denote evaluating the target model on long and retrieval contexts, respectively.
% Results are organized by target models (LLaMA-3.1-8B/70B and Qwen2.5-7B/72B) and methods (LC: Long Context, RAG, and RAPID).
For RAPID, we evaluate both self-speculation (using same-size RAG drafter) and upward-speculation (using larger RAG drafter) settings. \textcolor{green}{Green}/\textcolor{red}{red} highlighting indicates better/worse performance compared to LC baseline. \textbf{Bold} and \underline{underline} indicate best and second best metric score.}
\resizebox{0.95\textwidth}{!}{
\renewcommand{\arraystretch}{1.5} % Increase row height for better readability
\Huge % Use a slightly smaller font size for better fit
\begin{tabular}{@{}l l l c c c c c c c c r r@{}}

\toprule
\textbf{Target Model} & \textbf{Method} & \textbf{Draft Model} & & \multicolumn{4}{c}{\textbf{$\infty$Bench}} & \multicolumn{2}{c}{\textbf{LongBench v2}} & \multicolumn{2}{c}{\textbf{Efficiency}} \\ 
\cmidrule(lr){5-8} \cmidrule(lr){9-10} \cmidrule(lr){11-12}
& & & & \textbf{En. QA} & \textbf{En. MC} & \textbf{En. Sum} & \textbf{AVG.} & \textbf{Overall} & \textbf{Overall (CoT)} & \textbf{Prefill Time (s)} & \textbf{Speedup} \\ 
\midrule
\multirow{6}{*}{LLaMA-3.1-8B} & LC & \centering - & & 34.58 & 53.28 & \underline{30.14} & 39.33 & 28.0 & 30.4 & \underline{25.89} & 1.00$\times$ \\
& RAG & \centering - & & \scorecolor{34.58}{31.91}{31.91} & \scorecolor{53.28}{62.01}{62.01} & \scorecolor{30.14}{27.27}{27.27} & \scorecolor{39.33}{40.40}{40.40} & \scorecolor{28.0}{29.2}{29.2} & \scorecolor{30.4}{33.4}{33.4} & \prefillcolor{25.89}{0.36}{\bestscore{0.36}} & \scorecolor{1.00}{3.35}{\bestscore{3.35}}$\times$ \\
& SD & \centering - & & \scorecolor{34.58}{32.90}{32.90} & \scorecolor{53.28}{55.90}{55.90} & \scorecolor{30.14}{30.11}{30.11} & \scorecolor{39.33}{39.64}{39.64} & \scorecolor{28.0}{29.4}{29.4} & \scorecolor{30.4}{31.0}{31.0} & \prefillcolor{25.89}{26.37}{26.37} & \scorecolor{1.00}{1.63}{1.63}$\times$ \\
& MagicDec & \centering - & & \scorecolor{34.58}{29.83}{29.83} & \scorecolor{53.28}{52.03}{52.03} & \scorecolor{30.14}{30.18}{30.18} & \scorecolor{39.33}{37.35}{37.35} & \scorecolor{28.0}{29.2}{29.2} & \scorecolor{30.4}{30.6}{30.6} & \prefillcolor{25.89}{26.05}{26.05} & \scorecolor{1.00}{0.71}{0.71}$\times$ \\
& RAPID & LLaMA-3.1-8B (RAG) & & \scorecolor{34.58}{34.90}{\secondbestscore{34.90}} & \scorecolor{53.28}{63.32}{\secondbestscore{63.32}} & \scorecolor{30.14}{30.27}{\bestscore{30.27}} & \scorecolor{39.33}{42.83}{\secondbestscore{42.83}} & \scorecolor{28.0}{32.4}{\secondbestscore{32.4}} & \scorecolor{30.4}{34.2}{\secondbestscore{34.2}} & \prefillcolor{25.89}{26.37}{26.37} & \scorecolor{1.00}{2.10}{\secondbestscore{2.10}}$\times$ \\
& RAPID & LLaMA-3.1-70B (RAG) & & \scorecolor{34.58}{40.94}{\bestscore{40.94}} & \scorecolor{53.28}{79.04}{\bestscore{79.04}} & \scorecolor{30.14}{29.96}{29.96} & \scorecolor{39.33}{49.98}{\bestscore{49.98}} & \scorecolor{28.8}{38.8}{\bestscore{38.8}} & \scorecolor{30.4}{40.2}{\bestscore{40.2}} & \prefillcolor{25.89}{28.04}{28.04} & \scorecolor{1.00}{1.14}{1.14}$\times$ \\

\midrule
\multirow{3}{*}{LLaMA-3.1-70B} & LC & \centering - & & 36.48 & 68.56 & \textbf{30.18} & 45.07 & 31.6 & 36.2 & \underline{160.54} & 1.00$\times$ \\
& RAG & \centering - & & \scorecolor{36.48}{38.66}{\secondbestscore{38.66}} & \scorecolor{68.56}{76.86}{\secondbestscore{76.86}} & \scorecolor{30.18}{27.17}{27.17} & \scorecolor{45.07}{47.56}{\secondbestscore{47.56}} & \scorecolor{31.6}{38.0}{\secondbestscore{38.0}} & \scorecolor{36.2}{39.4}{\secondbestscore{39.4}} & \prefillcolor{160.54}{2.81}{\bestscore{2.81}} & \scorecolor{1.00}{4.44}{\bestscore{4.44}}$\times$ \\
& RAPID & LLaMA-3.1-70B (RAG) & & \scorecolor{36.48}{40.56}{\bestscore{40.56}} & \scorecolor{68.56}{81.66}{\bestscore{81.66}} & \scorecolor{30.18}{29.64}{\secondbestscore{29.64}} & \scorecolor{45.07}{50.62}{\bestscore{50.62}} & \scorecolor{31.6}{40.2}{\bestscore{40.2}} & \scorecolor{36.2}{40.2}{\bestscore{40.2}} & \prefillcolor{160.54}{163.43}{163.43} & \scorecolor{1.00}{2.69}{\secondbestscore{2.69}}$\times$ \\
\midrule
\multirow{4}{*}{Qwen2.5-7B} & LC & \centering - & & 16.93 & 66.81 & 30.62 & 38.12 & 30.2 & 33.2 & \underline{20.32} & 1.00$\times$ \\
& RAG & \centering - & & \scorecolor{16.93}{20.28}{\secondbestscore{20.28}} & \scorecolor{66.81}{75.11}{75.11} & \scorecolor{30.62}{25.60}{25.60} & \scorecolor{38.12}{40.33}{40.33} & \scorecolor{30.2}{31.2}{31.2} & \scorecolor{33.2}{33.8}{33.8} & \prefillcolor{20.32}{0.34}{\bestscore{0.34}} & \scorecolor{1.00}{6.47}{\bestscore{6.47}}$\times$ \\
& RAPID & Qwen2.5-7B (RAG) & & \scorecolor{16.93}{19.81}{19.81} & \scorecolor{66.81}{75.98}{\secondbestscore{75.98}} & \scorecolor{30.62}{31.64}{\secondbestscore{31.64}} & \scorecolor{38.12}{42.48}{\secondbestscore{42.48}} & \scorecolor{30.2}{32.0}{\secondbestscore{32.0}} & \scorecolor{33.2}{35.4}{\secondbestscore{35.4}} & \prefillcolor{20.32}{21.62}{21.62} & \scorecolor{1.00}{2.65}{\secondbestscore{2.65}}$\times$ \\
& RAPID & Qwen2.5-72B (RAG) & & \scorecolor{16.93}{30.10}{\bestscore{30.10}} & \scorecolor{66.81}{83.84}{\bestscore{83.84}} & \scorecolor{30.62}{32.21}{\bestscore{32.21}} & \scorecolor{38.12}{48.72}{\bestscore{48.72}} & \scorecolor{30.2}{35.6}{\bestscore{35.6}} & \scorecolor{33.2}{41.2}{\bestscore{41.2}} & \prefillcolor{20.32}{23.45}{23.45} & \scorecolor{1.00}{0.93}{0.93}$\times$ \\
\midrule
\multirow{3}{*}{Qwen2.5-72B} & LC & \centering - & & 39.21 & 81.66 & 32.45 & 51.11 & 40.0 & 43.9 & 162.42 & 1.00$\times$ \\
& RAG & \centering - & & \scorecolor{39.21}{30.72}{30.72} & \scorecolor{81.66}{80.22}{80.22} & \scorecolor{32.45}{28.63}{28.63} & \scorecolor{51.11}{46.52}{46.52} & \scorecolor{40.0}{38.8}{38.8} & \scorecolor{43.9}{39.8}{39.8} & \prefillcolor{162.42}{3.09}{\bestscore{3.09}} & \scorecolor{1.00}{3.60}{\bestscore{3.60}}$\times$ \\
& RAPID & Qwen2.5-72B (RAG) & & \scorecolor{39.21}{40.52}{\bestscore{40.52}} & \scorecolor{81.66}{85.59}{\bestscore{85.59}} & \scorecolor{32.45}{32.94}{\bestscore{32.94}} & \scorecolor{51.11}{53.02}{\bestscore{53.02}} & \scorecolor{40.0}{42.9}{\bestscore{42.9}} & \scorecolor{43.9}{44.1}{\bestscore{44.1}} & \prefillcolor{162.42}{164.80}{164.80} & \scorecolor{1.00}{1.98}{\secondbestscore{1.98}}$\times$ \\
\bottomrule
\end{tabular}
}
% \caption{Performance metrics comparison among different models and configurations, organized by target model and method.}

% \caption{Main results comparing RAPID with baseline approaches. We evaluate models on $\infty$Bench and LongBench v2. For $\infty$Bench, we report F1 score for En.QA, accuracy for En.MC, ROUGE-L-Sum for En.Sum, and their average (AVG.). For LongBench v2, we report overall accuracy w/ and w/o CoT. We also report the prefill time (seconds) and generation throughput (tokens/second) to measure efficiency. LC denotes long-context target LLM baseline, and RAG represents retrieval-augmented generation baseline. Best results for each target model are \underline{underlined}.}

\label{tab:main_results}
\end{table*}

% \subsection{Implementation Details}

% \paragraph{Target and Draft LLMs} 
% Given a long-context input of target LLM with a query, RAPID utilize a RAG module to retrieve the relevant text segments with the query in the long context. The retrieval context will be used to query  a draft LLM (RAG drafter) to generate $\gamma$ tokens each time, which are verified by the target LLM. Specifically, we choose LLaMA-3.1-8B, LLaMA-3.1-70B, Qwen2.5-7B and Qwen2.5-70B as the target LLMs in~\ourMethod{}, respectively. There are two settings of RAPID: (1) \textit{self-speculation} (Self-Spec), where long-context target LLM and RAG drafter are of the same scale; and (2) \textit{upward-speculation} (Upward-Spec), where RAG drafter involves larger parameter scale than target LLM. For the smaller target LLMs including LLaMA-3.1-8B and Qwen2.5-7B, we apply both settings of RAPID that utilizing the LLaMA-3.1-8B xxx as RAG drafter. For larger LLaMA-3.1-70B and Qwen2.5-70B, we apply the Self-Spec that xxx.

% \paragraph{RAG Setup.} We split the long-context input with a segment length of 512 tokens. The text segments and query are encoded by BGE-M3~\citep{chen-etal-2024-m3} model as embeddings. We subsequently sample the top-$k$ segments by the cosine similarity with the query embeddings. We set the maximum retrieval length as 1/24 long context length and minimun retrieval length as 4096. We filter the cosine similarity below the threshold of 0.3 in retrieved segments.

\subsection{Implementation Details}

\paragraph{Target and Draft LLMs.} 
RAPID is evaluated across different model scales using LLaMA-3.1 (8B, 70B) and Qwen2.5 (7B, 72B) as target LLMs. We implement two speculation settings: (1) \textit{self-speculation}, where the RAG drafter matches the target LLM's scale, and (2) \textit{upward-speculation}, where a larger RAG drafter assists a smaller target LLM. For smaller models (LLaMA-3.1-8B, Qwen2.5-7B), we evaluate both settings, while larger models (LLaMA-3.1-70B, Qwen2.5-72B) use self-speculation only. The RAG drafter generates $\gamma=10$ tokens per step for target LLM verification. We search $\eta$ in~\Cref{eq:new_target} between $\{5, 10, 20\} $  for self-speculation and $\{40, 50\} $ for upward-speculation, which would be further investigated in~\Cref{subsubsec:robustness}.

\paragraph{RAG Setup.} 
The long context is segmented into 512-token chunks and embedded using BGE-M3~\citep{chen-etal-2024-m3}. We retrieve top-$k$ segments based on cosine similarity with the query embedding, filtering out segments below a 0.3 similarity threshold. The retrieval context length is bounded between 4096 tokens and 1/24 of the input length.

% \paragraph{Hyperparameters.}

\subsection{Evaluation Protocol}

\paragraph{Baselines.} We compare our~\ourMethod{} with baselines including: (1) long-context target LLM (LC), where the target LLM in~\ourMethod{}  directly generates responses upon long context; (2) RAG, where the target LLM generates responses upon retrieval context of draft LLM input in~\ourMethod{}; (3) naive Speculative Decoding (SD), which involves identical target and draft LLMs with~\ourMethod{} but using the naive long-context target distribution; (4) MagicDec~\citep{chen2024magicdec}, which utilizes the StreamingLLM~\citep{xiao2023streamingllm} to compress the KV cache of draft model. We set the KV cache size as 4096 and sink tokens as 4.

\paragraph{Benchmarks.} We evaluate our \ourMethod{} with baselines on two benchmarks: (1) \textbf{$\infty$Bench}. We evaluate our method with baselines on three realistic tasks in this benchmark: long-book question answering (En.QA, metric: F1), multi-choice question-answering (En.MC, metric: accuracy), and summarization (En.Sum, metric: ROUGE-L-Sum). The context length in these tasks are beyond 100K. (2) \textbf{LongBench v2}, which involves multi-choice tasks across various context lengths from 8K to 2M words. We apply middle truncation following benchmark setup to ensure the context length within 128K tokens.

% \paragraph{Evaluation Setup.} 
% We measure the efficiency of our~\ourMethod{} and baselines across all expriments on the LongBench v2 (Long) subset with Chain-of-Thought (CoT)~\citep{Wei2022ChainOT} generation, which involves 120K context length and maximum 1K generation length. The prefill time and speedup is reported by the average over the subset, where speedup denotes RAPID (baselines) throughput / LC throughput for each target LLM. The temperature is set as 1 and 0.1 for $\infty$Bench and LongBench v2, respectively. More settings are listed in~\Cref{sec:extra_setup}.

\paragraph{Evaluation Setup} 
We conduct efficiency evaluations using the  LongBench v2 (Long, CoT) subset, where each example involves 120K (tokens) context length after truncation and 1K maximum
 generation tokens. Efficiency metrics include:
(1) \textit{prefill time} and (2) \textit{speedup}, computed as the ratio of method decoding throughput to LC throughput, both averaged across the subset. Additional experimental details are provided in~\Cref{sec:extra_setup}.

% For the self-specualtion evaluation of LLaMA-3.1-8B and Qwen2.5 7B, we 

% We use single A800 80GB GPU for the evaluation of LLaMA-3.1-8B and Qwen2.5. While for the evaluation involving LLaMA-3.1-8B

\begin{figure*}
    \centering
    \includegraphics[width=0.93\textwidth]{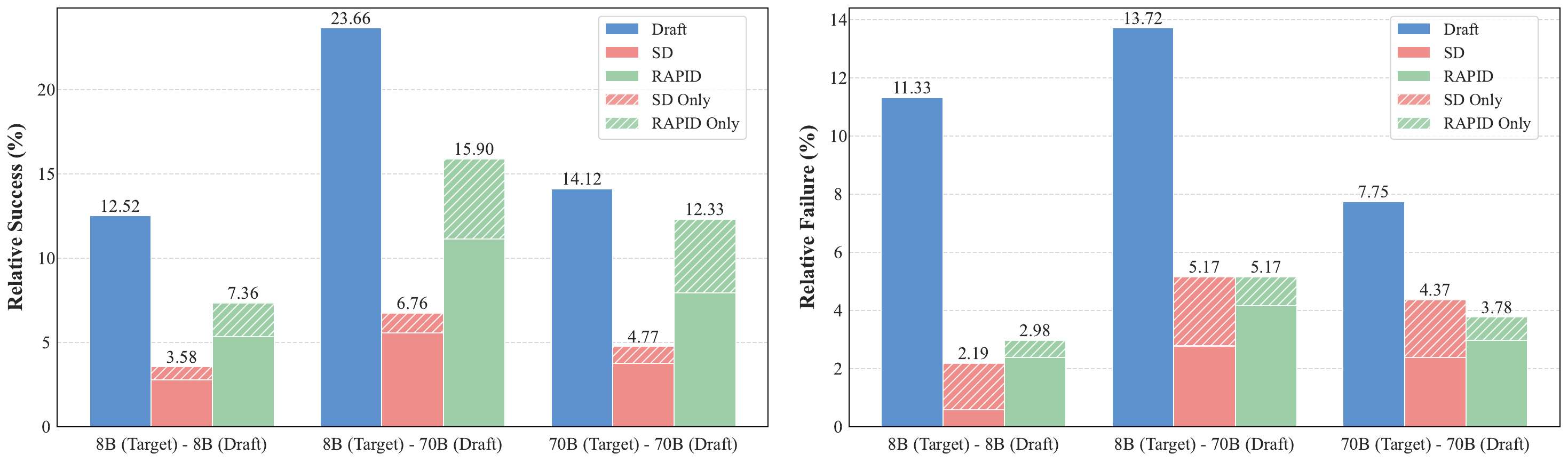}
    % \caption{Accuracy gains and drops of RAG Drafter (Draft), naive SD, and RAPID relative to corresponding target LLM. Gains represent accuracy attained by each method (model) but not by the target LLM. Drops represent accuracy attained by the target LLM but not by compared methods. ``SD only'' and ``RAPID only'' denote accuracy gains or drops uniquely attained by SD and RAPID, respectively, in instances where the RAG Drafter does not attain.}
    \caption{Relative performance to target LLMs across different target-draft model configurations of LLaMA-3.1 series on LongBench v2 (Overall). RAPID integrates both benefits from target and draft LLMs, hence achieving higher relative success rate (benefits from draft) without increasing failure rate (benefits from target).
    Relative success represents correct predictions made by each method but missed by the target LLM. Relative failure represents correct predictions by the target LLM but missed by each method. ``SD Only'' and ``RAPID Only'' indicate correct (or wrong) predictions made exclusively by SD and RAPID where both target and draft models cannot attain.}
    \vspace{-0.7em}
    \label{fig:gains-drops}
\end{figure*}

\section{Results and Analyses}

\subsection{Main Results}
We evaluate RAPID against baselines across different model scales and benchmarks. The results in~\Cref{tab:main_results} demonstrate the effectiveness of RAPID in both improving generation quality and efficiency for long-context inference.

\paragraph{RAPID integrates benefits from both target LLM and RAG drafter through self-speculation.} In the self-speculation setting, where RAPID uses same-scale models for target and draft, consistent improvements are observed across model families. For LLaMA-3.1-8B, RAPID with self-speculation achieves superior performance on $\infty$Bench (42.83 vs 39.33 LC, 40.40 RAG) and LongBench v2 (34.2\% vs 30.4\% LC, 33.4\% RAG). Similar improvements are seen for LLaMA-3.1-70B (50.62 vs 45.07 LC, 47.56 RAG on $\infty$Bench) and Qwen2.5 series. Notably, RAPID effectively integrates the complementary strengths of LC and RAG approaches - while RAG shows superior performance on certain tasks (e.g., En.MC: 79.04\% vs 53.28\% LC for LLaMA-3.1-8B), LC demonstrates advantages in others (e.g., En.QA: 34.58\% vs 31.91\% RAG). RAPID successfully captures these complementary benefits during inference, consistently achieving better or comparable performance to the stronger of its two components. Compared to existing speculative decoding approaches including naive SD and MagicDec, RAPID demonstrates superior performance through this effective integration mechanism.

\paragraph{Larger RAG drafters further boost performance through effective knowledge transfer.} Beyond self-speculation, RAPID enables a unique upward-speculation mechanism where larger models serve as RAG drafters while maintaining efficiency. This setting yields even more substantial improvements: LLaMA-3.1-8B with 70B RAG drafter achieves 49.98 on $\infty$Bench and 40.2\% overall accuracy on LongBench v2, surpassing not only its self-speculation results but even the LC performance of LLaMA-3.1-70B (36.2\%). Similar patterns emerge for Qwen2.5-7B with 72B RAG drafter, where the performance gains (48.72 vs 42.48 on $\infty$Bench) demonstrate the effectiveness of RAPID in leveraging and integrating knowledge from larger models through the retrieval-augmented speculation.

% \paragraph{Computational Efficiency.} 
\paragraph {RAPID demonstrates $>2\times$ speedup for long-context inference.} In self-speculation settings, RAPID achieves significant speedup over LC baseline (2.10$\times$ for LLaMA-3.1-8B, 2.69$\times$ for LLaMA-3.1-70B), and significantly surpasses naive SD and MagicDec. When employing upward-speculation with larger drafters, RAPID still maintains comparable throughput~\footnote{Note that upward-speculation requires extra GPUs to serve the RAG drafter like regular SD.} (1.14$\times$ for LLaMA-3.1-8B with 70B drafter, 0.93$\times$ for Qwen2.5-7B with 72B drafter) while substantially improving generation quality. While pure RAG shows highest throughput (e.g., 3.35$\times$ speedup for LLaMA-3.1-8B), its performance can be significantly compromised in certain scenarios (e.g., En.QA accuracy drops from 39.21 to 30.72 for Qwen2.5-72B). In contrast, RAPID effectively maintains competitive throughput while consistently achieving superior generation quality across different settings.

% These results demonstrate that RAPID successfully addresses the core challenges of long-context inference by:
% (1) enabling efficient speculation through retrieval-augmented drafting, 
% (2) effectively integrating the strengths of both LC and RAG approaches during inference, and 
% (3) maintaining competitive computational efficiency while improving generation quality. The strong performance across different model scales validates the effectiveness of RAPID as a general approach for accelerating long-context inference.

\begin{figure}[!t]
    \centering
    \includegraphics[width=0.98\linewidth]{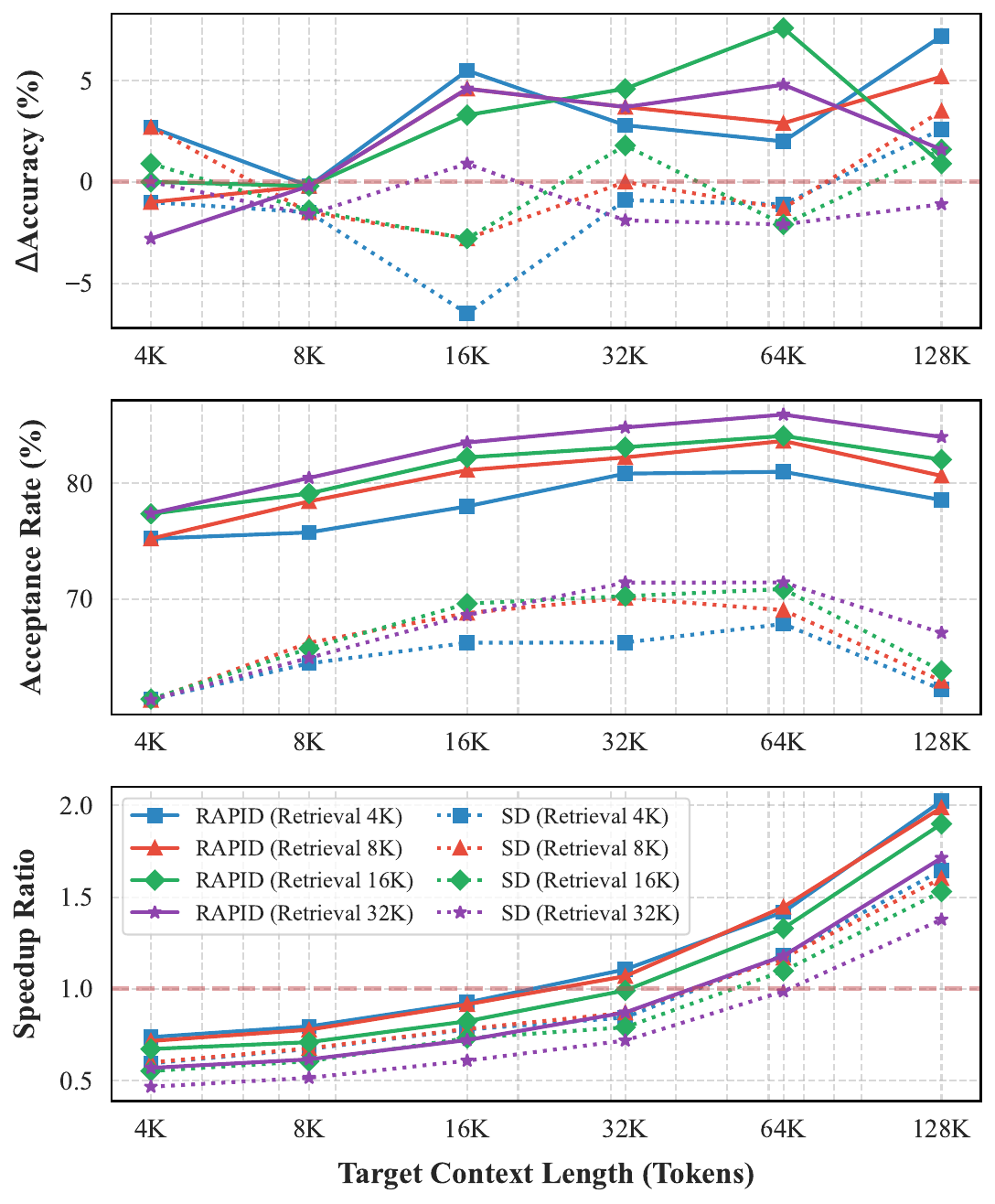}
    \vspace{-0.6em}
    \caption{Impact of context and retrieval lengths on RAPID (self-sepculation) performance and efficiency based on LLaMA-3.1-8B. RAPID consistently outperforms naive SD and achieves speedup beyond 32K context length with moderate retrieval lengths ($\le$16K).\textbf{Top:}$\Delta$Accuracy indicates the accuracy margins on LongBench v2 (Long, CoT) subset over target LLM. \textbf{Middle:} Acceptance rate indicating the proportion of accepted draft tokens. \textbf{Bottom:} Speedup ratio compared to target LLM inference ($>1$ indicates acceleration). 
    % RAPID consistently outperforms naive SD and achieves speedup beyond 32K context length with moderate retrieval lengths ($\le$16K).
    }
    \vspace{-0.5em}
    \label{fig:context-length}
\end{figure}

% \paragraph{Performance Comparison}

% \paragraph{Efficiency Comparison}

% \subsection{Analyses and Ablations}

\subsection{Benefits Integration Analysis}

\paragraph{RAPID incorporates benefits from RAG drafter while maintaining target model capabilities.}
To analyze how \ourMethod{} integrates the strengths of both RAG drafter and target LLM, we examine the relative success and failure of RAG drafter, SD, and~\ourMethod{} on LongBench v2. As shown in~\Cref{fig:gains-drops}, \ourMethod{} successfully handles additional cases where the target LLM fails by incorporating beneficial knowledge from the RAG drafter. Meanwhile, \ourMethod{} maintains the capabilities of target LLM, exhibiting significantly lower failure rates compared to using RAG drafter alone. This combination of gains from RAG drafter with minimal degradation of target LLM capabilities enables \ourMethod{} to outperform both target and draft models. Furthermore, the gains from RAG drafter in \ourMethod{} substantially exceed those in naive SD, demonstrating the effectiveness of our retrieval-augmented target distribution in~\Cref{eq:new_target}.

% demonstrates remarkable effectiveness in two aspects. First, compared to direct RAG application, \ourMethod{} achieves substantial accuracy gains over target LLMs while maintaining significantly lower accuracy drops. Second, RAPID substantially outperforms naive SD in terms of accuracy gains  with comparable drop rates, despite sharing identical target LLMs and RAG drafters. These results jointly validate that the retrieval-augmented target distribution (\Cref{eq:new_target}) in~\ourMethod{} effectively incorporates knowledge from RAG drafters while its inherent long-context target distribution prevents harmful deviations.

\paragraph{RAPID exhibits capabilities beyond individual target/draft LLMs.} Most notably, we observe an ``emergent phenomenon'' where RAPID successfully handles cases that both the target LLM and RAG drafter fail individually (shown as ``RAPID Only'' in~\Cref{fig:gains-drops}). Specifically, this emergent accuracy mass grows more pronounced as RAG drafters become stronger, from LLaMA-3.1-8B to LLaMA-3.1-70B. This suggests that RAPID not only combines the strengths of both models but also enables new capabilities through their synergistic interaction. The phenomenon becomes particularly evident in the upward-speculation setting, where the stronger RAG drafter facilitates more sophisticated knowledge transfer during inference.

\begin{table}[!t]
\centering
\caption{Evaluation on multi-turn dialogue generation with extended chat history for LLaMA-3.1-8B as both target and draft LLM. Quality scores (1-10) are rated by GPT-4-Turbo-1106 using LLM-as-a-Judge protocol.}
\resizebox{0.93\linewidth}{!}{
\begin{tabular}{lcccc}
\toprule
 & Quality & Acceptance Rate (\%) & Throughput \\
\midrule
Target LLM & 2.82 & - & 10.64$_{\pm 0.98}$ \\
RAG Drafter & \underline{3.95} & - & \textbf{40.49}$_{\pm 0.47}$ \\
SD & 2.94 & \underline{56.34}$_{\pm 0.13}$ & 14.07$_{\pm 3.08}$ \\
RAPID & \textbf{4.21} & \textbf{76.94}$_{\pm 0.13}$ & \underline{18.18}$_{\pm 3.23}$ \\
\bottomrule
\end{tabular}
}
\vspace{-0.5em}
    
\label{tab:generation-quality}
\end{table}

\begin{table}[!t]
\centering
\caption{Robustness study of RAPID with different draft influence parameter $\eta$. Results show performance gains ($\Delta$Accuracy) and speedup ratios on LongBench v2 (Long, CoT) subset using LLaMA-3.1-8B as target LLM, with LLaMA-3.1-8B and LLaMA-3.1-70B as RAG drafters under unrelated retrieval context.}
\resizebox{0.93\linewidth}{!}{
\begin{tabular}{l cccc}
\toprule
\multirow{2}{*}{$\eta$} & \multicolumn{2}{c}{LLaMA-3.1-8B (Draft)} & \multicolumn{2}{c}{LLaMA-3.1-70B (Draft)} \\
\cmidrule(lr){2-3} \cmidrule(l){4-5}
 & $\Delta$Accuracy & Speedup & $\Delta$Accuracy & Speedup \\
\midrule
% naive & 0.00 & 1.00 & 0.00 & 1.00 \\
0 & 1.20 & 1.62$\times$ & -1.30 & 0.67$\times$ \\
5 & 2.80 & 1.75$\times$ & 0.40 & 0.69$\times$ \\
10 & 1.60 & 1.77$\times$ & 1.20 & 0.72$\times$ \\
20 & 1.20 & 1.78$\times$ & 4.40 & 0.75$\times$ \\
30 & -2.40 & 2.07$\times$ & 6.60 & 0.80$\times$ \\
40 & -2.60 & 2.08$\times$ & 6.60 & 0.84$\times$ \\
50 & -6.30 & 2.10$\times$ & 6.00 & 0.87$\times$ \\
\bottomrule
\end{tabular}
}
\vspace{-0.5em}
    
\label{tab:robustness}
\end{table}

\subsection{Impact of Context and Retrieval Length.}

\paragraph{RAPID demonstrates effectiveness across various context configurations.}
We analyze how RAPID performs under varying target context lengths and RAG drafter retrieval lengths, as shown in~\Cref{fig:context-length}. The results demonstrate consistent advantages of RAPID over naive SD across all configurations. First, RAPID achieves significantly better performance gains (2-8\% $\Delta$Accuracy) over the long-context baseline compared to the marginal or negative gains (-5-2\%) of naive SD. This superior performance is accompanied by consistently higher acceptance rates (75-85\% versus 60-70\%) and better speedup ratios across all context  and retrieval lengths configurations.

\paragraph{RAPID achieves speedup for long-context inference beyond 32K.}
The impact of retrieval length reveals an interesting efficiency-effectiveness trade-off. In terms of computational efficiency, RAPID achieves acceleration (speedup $> 1.0\times$) when the target context length exceeds 32K, while SD requires contexts beyond 64K to demonstrate speedup. For retrieval length, while longer retrieval contexts generally lead to higher acceptance rates (up to 85\%), the speedup ratio is not necessarily increasing. Specifically, retrieval lengths of 4K and 8K achieve nearly identical speedup ratios, indicating minimal overhead in this scope. However, when retrieval length exceeds 16K, the increased computational overhead from longer draft contexts becomes apparent and impacts the overall speedup. These findings suggest that RAPID achieves remarkable efficiency when accelerating long-context inference beyond 32K tokens upon moderate retrieval length within 16K.

\subsection{Generation Quality Analysis}

\paragraph{RAPID achieves superior generation quality and throughput in real-world application.}
To evaluate the effectiveness of RAPID in practical long-context applications, we assess its performance on multi-turn dialogue generation. We construct a challenging evaluation dataset by adapting MT-Bench-101~\citep{bai-etal-2024-mt}: for each of the first 100 samples, we preserve their last-turn queries while distributing their previous conversation context within a longer chat history comprising additional dialogue turns from another 500 samples in MT-Bench-101. The resulting chat history is of around 122K tokens length. This setup tests the ability of models to maintain coherence and relevance while processing extensive dialogue history.

As shown in~\Cref{tab:generation-quality}, RAPID demonstrates substantial improvements across all metrics. Using GPT-4-Turbo-1106 as evaluator following LLM-as-a-Judge~\citep{Zheng2023JudgingLW}, RAPID achieves a generation quality score of 4.21, significantly outperforming the target LLM (2.82), RAG drafter (3.95) and naive SD (2.94). This quality improvement comes with a robust acceptance rate of 76.94\% (vs. 56.34\% for SD) and enhanced throughput of 18.18 tokens/second (1.7$\times$ speedup over target LLM), demonstrating practical advantages of RAPID in real-world long-context applications.

% \paragraph{A case study: when long-context/rag helps generation}

\subsection{Robustness to Retrieval Quality}
\label{subsubsec:robustness}
\paragraph{RAPID shows robustness to retrieval quality, which is further enhanced by stronger drafter.}
To assess the robustness of~\ourMethod{} regarding retrieval quality, we conduct stress tests by deliberately using unrelated retrieval context (using the context of first sample from LongBench v2 for all samples) while varying the knowledge transfer parameter $\eta$ in~\Cref{eq:new_target}. As shown in~\Cref{tab:robustness}, with self-speculation (LLaMA-3.1-8B drafter), RAPID maintains performance gains ($\Delta$Accuracy $>$ 0) and improved efficiency (speedup 1.62$\times$-1.78$\times$) when $\eta \leq 20$, even with irrelevant retrieval context. However, when $\eta > 20$, the RAG drafter may overly impact the target distribution, leading to performance degradation. Moreover, upward-speculation with LLaMA-3.1-70B as drafter demonstrates even better robustness, maintaining positive performance gains (up to 6.60\%) across all $\eta$ values despite totally unrelated retrieval context. This increased resilience suggests that RAPID effectively leverages the inherent capabilities of stronger RAG drafters, maintaining reliable performance even under suboptimal retrieval quality.

\section{Related Work}

\paragraph{Speculative Decoding}
Speculative Decoding~\citep{chen2023acceleratinglargelanguagemodel, leviathan2023fastinferencetransformersspeculative} accelerates LLM inference by leveraging smaller draft models to propose multiple tokens for single-pass validation. REST~\citep{he-etal-2024-rest} extends the drafting mechanism by retrieving possible continuation from a built corpus rather than generating with a draft LLM. Ouroboros~\citep{zhao-etal-2024-ouroboros} proposes producing longer and more acceptable candidates from draft LLM per step based on draft phrases. Inspired by the speculation mechanism, Speculative RAG~\citep{Wang2024SpeculativeRE} proposes a parallel draft-then-verify mechanism to improve RAG quality. Recent works like TriForce~\citep{sun2024triforce} and MagicDec~\citep{chen2024magicdec} attempt to extend SD to long-context scenarios through KV cache compression techniques~\citep{xiao2023streamingllm}. However, such compression approaches often result in weakened draft models with limited speedup in complex applications. In contrast, RAPID adopts RAG drafters that maintain both high-quality speculation and substantial speedup in various applications.

\paragraph{Long-Context Inference Speedup}
Research on accelerating long-context inference has primarily focused on two directions: optimizing KV cache operations through selective retention~\citep{xiao2023streamingllm,Kang2024GEARAE,Zhang2023H2OHO} or quantization~\citep{Sheng2023HighthroughputGI, Liu2024KIVIAT,he2024zipcache}, and exploring prompt compression methods~\citep{Chevalier2023AdaptingLM,Jiang2023LLMLinguaCP,Pan2024LLMLingua2DD}. While these approaches improve efficiency, they often compromise contextual information without quality guarantees~\citep{Zhang2024MoreTL}. RAPID addresses this limitation by leveraging SD to maintain generation quality through explicit verification from long-context LLMs, providing a more reliable balance between efficiency and performance.

\paragraph{RAG and Long-Context LLMs}
Recent studies have revealed complementary strengths between RAG and long-context LLMs, with substantial prediction overlap despite different performance characteristics~\citep{Li2024RetrievalAG,Li2024LongCV}. While long-context LLMs excel in document-based tasks, RAG shows advantages in scenarios like dialogue-based question-answering. Previous attempts to combine these approaches, such as self-reflection routing~\citep{Li2024RetrievalAG} and step-by-step RAG enhancement~\citep{Yue2024InferenceSF}, rely heavily on task-specific prompt engineering. RAPID provides a more principled solution by directly integrating RAG benefits into the decoding process, enabling dynamic adaptation while preserving advantages of both paradigms.

\section{Conclusion}
In this work, we introduce RAPID, a novel decoding method that bridges the efficiency gap of speculative decoding (SD) in long-context inference while enhancing generation quality through retrieval-augmented speculation. The key of~\ourMethod{} lies in leveraging RAG drafters to enable efficient speculation for long-context target LLMs, along with a retrieval-augmented target distribution that effectively integrates knowledge from potentially stronger drafters. Through extensive experiments, we demonstrate that RAPID successfully achieves both computational efficiency and improved generation quality across different model scales and tasks. Specifically, RAPID enables more than 2$\times$ speedup while maintaining performance advantages in self-speculation settings, and achieves substantial quality improvements through upward-speculation with stronger RAG drafters. These results establish RAPID as a practical solution for accelerating long-context inference with improved generation quality.

\section*{Acknowledgments}

This project was partially supported by the Singapore Ministry
of Education Academic Research Fund Tier 1 (Award Number: T1 251RES2514) and DAMO Academy Research Intern Program.

% \clearpage
\section*{Impact Statement}

This paper presents work whose goal is to advance the field of 
Machine Learning. There are many potential societal consequences 
of our work, none which we feel must be specifically highlighted here.
\bibliography{example_paper}
\bibliographystyle{icml2025}

%%%%%%%%%%%%%%%%%%%%%%%%%%%%%%%%%%%%%%%%%%%%%%%%%%%%%%%%%%%%%%%%%%%%%%%%%%%%%%%
%%%%%%%%%%%%%%%%%%%%%%%%%%%%%%%%%%%%%%%%%%%%%%%%%%%%%%%%%%%%%%%%%%%%%%%%%%%%%%%
% APPENDIX
%%%%%%%%%%%%%%%%%%%%%%%%%%%%%%%%%%%%%%%%%%%%%%%%%%%%%%%%%%%%%%%%%%%%%%%%%%%%%%%
%%%%%%%%%%%%%%%%%%%%%%%%%%%%%%%%%%%%%%%%%%%%%%%%%%%%%%%%%%%%%%%%%%%%%%%%%%%%%%%
\newpage
\appendix
\onecolumn

\section{Proof of Theorem 1}
\label{sec:proof1}

We analyze the gradient of the knowledge distillation loss with respect to the target model's logits. The distillation loss with temperature $\mathcal{T}$ is defined as:

\begin{equation}
\begin{aligned}
    \mathcal{L} &= \mathcal{T}^{2} \cdot \mathrm{KL}(q(x) || p(x)) \\
    &= \mathcal{T}^{2} \sum_j q(x_j) \log\frac{q(x_j)}{p(x_j)}
\end{aligned}
\end{equation}

where the target distribution $p(x)$ is parameterized by logits $z$ through softmax:
\begin{equation}
    p(x_j) = \frac{\exp(z_j/\mathcal{T})}{\sum_k \exp(z_k/\mathcal{T})}
\end{equation}

\textbf{Theorem:} The gradient of the distillation loss with respect to logit $z_i$ is:
\begin{equation}
    \frac{\partial \mathcal{L}}{\partial z_i} = -\mathcal{T}[q(x_i) - p(x_i)]
\end{equation}

\textbf{Proof:} We derive this gradient through the following steps:

1) First, expand the derivative using the chain rule:
\begin{equation}
    \frac{\partial \mathcal{L}}{\partial z_i} = \mathcal{T}^{2} \sum_j q(x_j) \frac{\partial}{\partial z_i}[\log q(x_j) - \log p(x_j)]
\end{equation}

2) Note that $q(x_j)$ is independent of $z_i$:
\begin{equation}
    = -\mathcal{T}^{2} \sum_j q(x_j) \frac{\partial}{\partial z_i} \log p(x_j)
\end{equation}

3) Expand the log probability:
\begin{equation}
    = -\mathcal{T}^{2} \sum_j q(x_j) \frac{\partial}{\partial z_i} \left[\frac{z_j}{\mathcal{T}} - \log\sum_k \exp(z_k/\mathcal{T})\right]
\end{equation}

4) Apply the derivative using the Kronecker delta $\delta_{ij}$:
\begin{equation}
    = -\mathcal{T}^{2} \sum_j q(x_j) \left[\frac{\delta_{ij}}{\mathcal{T}} - \frac{1}{\mathcal{T}}\frac{\exp(z_i/\mathcal{T})}{\sum_k \exp(z_k/\mathcal{T})}\right]
\end{equation}

5) Simplify using the definition of $p(x_i)$:
\begin{equation}
    = -\mathcal{T} \sum_j q(x_j) [\delta_{ij} - p(x_i)]
\end{equation}

6) The sum over $j$ with $\delta_{ij}$ selects only $q(x_i)$:
\begin{equation}
    = -\mathcal{T} [q(x_i) - \sum_j q(x_j)p(x_i)]
\end{equation}

7) Since $\sum_j q(x_j) = 1$, we obtain our final result:
\begin{equation}
    = -\mathcal{T}[q(x_i) - p(x_i)]
\end{equation}

This gradient shows that the distillation loss pushes the target distribution $p(x)$ towards the draft distribution $q(x)$ with strength proportional to the temperature $\mathcal{T}$. \qed

\section{Correctness of RAPID's Residual Distribution}
\label{sec:proof2}

% \subsection*{A.3. Correctness of RAPID's Residual Distribution}

We prove that for RAPID's retrieval-augmented speculative decoding, when rejection occurs, sampling from the distribution
\begin{equation}
    x_i \sim \texttt{norm}(\max(p(x_i)-\hat{p}(x_i), p(x_i)-q(x_i)))
\end{equation}
maintains the target distribution $p(x_i)$, where:
\begin{equation}
    p(x_i) = p_\mphi(x_i \vert [\mathcal{C};x_{<i}]) \text{ (target distribution)}
\end{equation}
\begin{equation}
    q(x_i) = q_\mpsi(x_i \vert [\mathcal{C^{\text{S}}};x_{<i}]) \text{ (RAG drafter distribution)}
\end{equation}
\begin{equation}
    \hat{p}(x_i) = \softmax(\hat{z}(x_i)/T) \text{ (retrieval-augmented target)}
\end{equation}

\textbf{Proof:}
Let $x^\prime$ be a candidate token. Under RAPID's rejection sampling scheme:

1) For a token $x^\prime$ proposed by the draft model, the acceptance criterion is:
\begin{equation}
    r \leq \min(1, \frac{\hat{p}(x^\prime)}{q(x^\prime)})
\end{equation}
where $r \sim U(0,1)$

2) This leads to an acceptance probability:
\begin{equation}
    P(\text{accept}|x^\prime) = \min(q(x^\prime), \hat{p}(x^\prime))
\end{equation}

3) The residual probability mass that needs to be redistributed upon rejection is:
\begin{equation}
    p(x^\prime) - \min(q(x^\prime), \hat{p}(x^\prime)) = \max(p(x^\prime)-q(x^\prime), p(x^\prime) - \hat{p}(x^\prime))
\end{equation}

4) Let $\beta$ be the total acceptance probability:
\begin{equation}
    \beta = \sum_{x^\prime} \min(q(x^\prime), \hat{p}(x^\prime))
\end{equation}

5) Therefore, upon rejection, we must sample from:
\begin{equation}
    p^\prime(x^\prime) = \frac{p(x^\prime) - \min(q(x^\prime), \hat{p}(x^\prime))}{\sum_{x^\prime}(p(x^\prime) - \min(q(x^\prime), \hat{p}(x^\prime)))} = \frac{p(x^\prime) - \min(q(x^\prime), \hat{p}(x^\prime))}{1-\beta}
\end{equation}

This residual distribution ensures that for any token $x^\prime$:
\begin{equation}
    P(x = x^\prime) = \min(q(x^\prime), \hat{p}(x^\prime)) + (1-\beta)\frac{p(x^\prime) - \min(q(x^\prime), \hat{p}(x^\prime))}{1-\beta} = p(x^\prime)
\end{equation}

% The $\max(\cdot, 0)$ operation ensures non-negative probabilities in the residual distribution, as negative differences would indicate tokens that are overestimated by either the draft model or the retrieval-augmented target, and thus should have zero probability in the residual distribution. \qed

\section{Evaluation Setup}
\label{sec:extra_setup}
We conduct comprehensive evaluations across different model scales and configurations. We use temperature values of 1.0 and 0.1 for $\infty$Bench and LongBench v2, respectively. For base-scale models (LLaMA-3.1-8B and Qwen2.5-7B), we evaluate RAPID's self-speculation capabilities against multiple baselines including naive Speculative Decoding, MagicDec, Long Context (LC), and RAG implementations, using a single NVIDIA A800 80GB GPU. 

For large-scale models (LLaMA-3.1-70B and Qwen2.5-72B), self-speculation experiments are conducted using a distributed setup with 8$\times$A800 80GB GPUs. In upward-speculation settings, we employ a hybrid configuration where the target models (LLaMA-3.1-8B/Qwen2.5-7B) operate on a single A800 80GB GPU, while leveraging an additional 7$\times$A800 80GB GPUs to accommodate the larger RAG drafter.

% % Optional: Add infrastructure details if relevant
% All experiments are performed using [framework details] with [any relevant system configurations].

\section{More Efficiency Analyses}

\subsection{FLOPs Comparison}
\label{subsec:flops_comparison}

We present a detailed comparison of floating-point operations (FLOPs) per generation step (producing $\gamma$ tokens) in~\Cref{tab:flops_comparison}, analyzing our RAPID method against baseline approaches. Let $T$ denote the number of parameters in the target model and $L$ represent the long context length. For the draft model, we define:
\begin{itemize}
    \item $D$: Number of parameters
    \item $L^R$: Retrieval length for draft LLM input
\end{itemize}

The key parameters for speculative generation include:
\begin{itemize}
    \item $\gamma$: Number of tokens generated by the draft model per step
    \item $\beta^{SD}$: Expected acceptance rate for standard speculative decoding
    \item $\beta^{RAPID}$: Expected acceptance rate for RAPID
\end{itemize}

Our analysis reveals that while all methods scale linearly with the target model size $T$, RAPID achieves superior efficiency through its higher acceptance rate ($\beta^{RAPID} > \beta^{SD}$), which directly reduces the amortized FLOPs per generated token.

\begin{table}[h]
\centering
\caption{FLOPs comparison for different methods per step.}
\label{tab:flops_comparison}
\begin{tabular}{l|c}
\toprule
\textbf{Method} & \textbf{FLOPs} \\
\midrule
Long Context & $2\gamma TL + \gamma^2 T$ \\
RAG Drafter & $2\gamma DL^R + \gamma^2 D$ \\
SD & $\frac{2\gamma DL^R + \gamma^2 D + 2T(L+\gamma)}{\beta^{SD}}$ \\
RAPID & $\frac{2\gamma DL^R + \gamma^2 D + 2T(L+\gamma)}{\beta^{RAPID}}$ \\
\bottomrule
\end{tabular}

\end{table}

\subsection{Overhead of RAG}
Unlike regular RAG pipeline, which builds indexes for a large external corpus (hundreds of millions of documents), we only index/retrieve the chunks for the input long context ($<$128K) on-the-fly during inference. Therefore, the RAG component latency in our method will become marginal compared to the inference latency over long context. \Cref{tab:rapid-latency} presents the average latency (in seconds) for each component of RAPID on LongBench v2 (Long, CoT) using LLaMA-3.1-8B and LLaMA-3.1-70B in self-speculative mode.

% Table for latency results
\begin{table}[h]
\centering
\caption{Latency of RAPID Components on LongBench v2 (Long, CoT)}
\label{tab:rapid-latency}
\begin{tabular}{lccc}
\toprule
\textbf{Model} & \textbf{RAG Pipeline (s)} & \textbf{Prefill (s)} & \textbf{Generation (s)} \\
\midrule
LLaMA-3.1-8B-RAPID & 1.43 & 26.37 & 32.25 \\
LLaMA-3.1-70B-RAPID & 1.43 & 163.43 & 121.76 \\
\bottomrule
\end{tabular}
\end{table}

\section{More Results}

\subsection{Comparison with TriForce}

TriForce was not included in~\Cref{tab:main_results} since it is not directly compatible with modern LLMs using Grouped Query Attention (GQA)~\citep{ainslie-etal-2023-gqa}. We conducted comparisons on LWM-Text-Chat-128K~\citep{Liu2024WorldMO} (based on LLaMA2-7B~\citep{touvron2023llama}), with a retrieval budget of 4096 tokens, a chunk size of 8, and a draft cache budget of 256 for TriForce. \Cref{tab:triforce-comparison} shows the performance and speedup of the decoding in LongBench v2 (Long, CoT).

% Table for comparison results
\begin{table}[h]
\centering
\caption{Comparison of RAPID and TriForce on LWM-Text-Chat-128K in LongBench v2 (Long, CoT) task.}
\label{tab:triforce-comparison}
\begin{tabular}{lcc}
\toprule
\textbf{Model} & \textbf{Accuracy} & \textbf{Speedup} \\
\midrule
LWM-Text-Chat-128K & 18.4 & 1.00 \\
TriForce & 18.0 & 1.27 \\
RAPID & 21.6 & 2.56 \\
\bottomrule
\end{tabular}
\end{table}

While TriForce achieves modest efficiency gains, RAPID delivers superior speedup and performance. TriForce relies on chunk-wise attention scores for information recall, but high attention scores do not always correlate with semantic relevance, e.g., initial tokens may act as ``attention sinks'' despite lacking meaningful content~\citep{xiao2023streamingllm}. In contrast, our RAPID drafter prioritizes semantically relevant information, resulting in a higher acceptance rate and greater speedup for complex tasks.

\subsection{Comparison with MInference}
We evaluated MInference~\citep{jiang2024minference} against our RAPID using LLaMA-3.1-8B on the LongBench v2 (Long, CoT)  task. \Cref{tab:minference-comparison} reports the performance, prefill time (in seconds), and decoding speedup relative to the LLaMA-3.1-8B.

% Table for comparison results
\begin{table}[h]
\centering
\caption{Comparison of RAPID and MInference on LLaMA-3.1-8B in LongBench v2 (Long, CoT) task.}
\label{tab:minference-comparison}
\begin{tabular}{lccc}
\toprule
\textbf{Model} & \textbf{Accuracy} & \textbf{Prefill Time (s)} & \textbf{Speedup} \\
\midrule
LLaMA-3.1-8B (Baseline) & 30.4 & 25.89 & 1.00 \\
MInference & 30.9 & 9.10 & 0.62 \\
RAPID & 34.2 & 26.37 & 2.10 \\
\bottomrule
\end{tabular}
\end{table}

MInference significantly reduces prefill time, showcasing its efficiency in the initial processing phase. However, RAPID outperforms MInference in overall performance and decoding throughput, achieving a higher speedup. We note that sparse attention, as utilized by MInference, is orthogonal to our approach, suggesting that integrating sparse attention with RAPID could further enhance efficiency.

\end{document}